\def\eqref#1{equation~\ref{#1}}
\def\1{\bm{1}}
\def\vh{{\bm{h}}}
\def\vp{{\bm{p}}}
\def\mA{{\bm{A}}}
\def\mH{{\bm{H}}}
\def\mJ{{\bm{J}}}
\def\mW{{\bm{W}}}
\DeclareMathAlphabet{\mathsfit}{\encodingdefault}{\sfdefault}{m}{sl}
\SetMathAlphabet{\mathsfit}{bold}{\encodingdefault}{\sfdefault}{bx}{n}
\newcommand{\softmax}{\mathrm{softmax}}
\definecolor{mycitecolor}{rgb}{0.0, 0.0, 0.50}
\pgfplotsset{compat=newest}
\newtheorem{thm}{Theorem}
\newtheorem{cor}{Corollary}
\theoremstyle{definition}
\newcommand\defop[2]{\DeclareMathOperator*{#1}{#2}}
\defop\opindex{index}
\defop\clustersum{clustersum}
\defop\pca{pca}
\newcommand\crossentropy{\mathrm{crossentropy}}
\newcommand{\draftcomment}[1]{#1}
\renewcommand{\draftcomment}[1]{}
\title{
Closing the Curious Case of Neural Text \\Degeneration
}
\author{Matthew Finlayson \\
University of Southern California \\
\texttt{mfinlays@usc.edu} 
\And
John Hewitt \\
Stanford University \\
\texttt{johnhew@cs.stanford.edu} 
\And
Alexander Koller \\
Saarland University \\
\texttt{koller@coli.uni-saarland.de} 
\And
Swabha Swayamdipta \\
University of Southern California \\
\texttt{swabhas@usc.edu}
\And
Ashish Sabharwal \\
The Allen Institute for AI \\
\texttt{ashishs@allenai.org}
}
\begin{document}

\maketitle

\begin{abstract}
  Despite their ubiquity in language generation, it remains unknown why truncation sampling heuristics like nucleus sampling are so effective.
We provide a theoretical explanation for the effectiveness of the truncation sampling by proving that truncation methods that discard tokens below some probability threshold (the most common type of truncation) can guarantee that all sampled tokens have nonzero true probability. 
However, thresholds are a coarse heuristic, and necessarily discard some tokens with nonzero true probability as well. 
In pursuit of a more precise sampling strategy, we show that we can leverage a known source of model errors, the softmax bottleneck, to prove that certain tokens have nonzero true probability, without relying on a threshold.
Based on our findings, we develop an experimental truncation strategy and the present pilot studies demonstrating the promise of this type of algorithm. 
Our evaluations show that our method outperforms its threshold-based counterparts under automatic and human evaluation metrics for low-entropy (i.e., close to greedy) open-ended text generation.
Our theoretical findings and pilot experiments provide both insight into why truncation sampling works, and make progress toward more expressive sampling algorithms that better surface the generative capabilities of large language models.
% necessarily implies that the model overestimates some token probabilities % by some non-zero, though bounded, amount.
% By coarsely discarding all tokens % with probability less than the bound, % threshold-based sampling can guarantee that all sampled tokens have non-zero true probability, i.e., are in the true support.
% However, since a model may assign less probability to a token in the true support than a token not in the true support, any threshold will misclassify at least one of these tokens.
% Fortunately, we find that it is possible to exploit the known low-rank basis (the softmax matrix) 

% to develop a more fine-grained truncation strategy which directly accounts for these model errors without relying on a threshold. 
\end{abstract}

\section{Introduction}

Crucial to the remarkable generative capabilities of today's large language models (LLMs)~\citep{openai2023gpt4,touvron2023llama,chowdhery2022palm} are the sampling algorithms responsible for selecting the next token at each timestep.
The most common of these algorithms use a simple truncation strategy: sample only the tokens that have probability greater than some threshold~\citep{Holtzman2020Nucleus,fan2018hierarchical}.
In the quest for high-entropy generation wherein one wants to be able to generate multiple good completions, it has been empirically established that the search for the highest-likelihood strings through e.g., beam search or greedy decoding led to low-quality generations \citep{hashimoto2019unifying}.
Threshold-based truncation sampling presents a compelling alternative: by avoiding the tokens at the tail end of the distribution which correspond to degenerate text it produces significantly more coherent generations \citep{ippolito-etal-2019-comparison, Holtzman2020Nucleus,delucia-etal-2021-decoding}.
However, beyond the intuition that language models tend to assign too much probability to tokens that should have 0 or near-0 probability (akin to smoothing~\citep{Hewitt2022TruncationSA}), prior work has been limited in establishing \textit{why} truncation sampling is so essential in autoregressive generation. 

\begin{figure}[ht]
  \centering
  \input{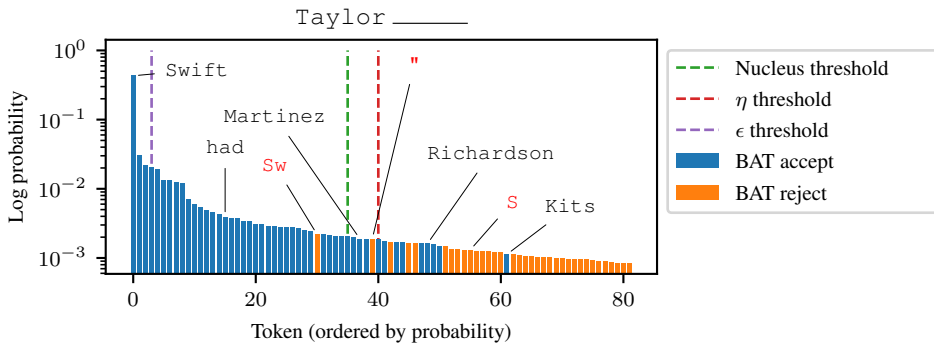}
  \caption{
    The next-token distribution from GPT-2 XL for the prefix ``\texttt{Taylor}'', with the tokens ordered by probability.
    Dashed vertical lines denote thresholds used to reject low-probability tokens, under various truncation strategies.
    Our basis-aware-threshold (BAT) sampling accepts tokens shown in blue and rejects those in orange.
    As evident, BAT rejects some implausible tokens assigned high probability under the model while accepting many plausible yet low-probability tokens---this is not possible under truncation sampling.
    BAT uses the softmax matrix to find tokens that might have non-zero true probability, without relying on a threshold.
    See more examples in Fig.~\ref{fig:unit-tests}.
  }
  \label{fig:fig1}
\end{figure}

In this paper, we provide a precise mathematical explanation to elucidate the extraordinary success of threshold-based truncation sampling (\S\ref{sec:truncation-works}).
First, we prove via an argument about log-probability errors that threshold sampling is guaranteed to only sample tokens in the support of the true distribution, so long as the chosen threshold is larger than some bound (Corollary~\ref{cor:truncation-works}).
Next, we present a method to more directly account for a likely source of tail errors: the \emph{softmax bottleneck}~\citep{Yang2017BreakingTS}, which states that the low-rank softmax matrix used at the output layer of language models causes probability errors in the model's output distribution (\S\ref{sec:basis-aware}).
Specifically, we show how to leverage the restricted structure imposed by the softmax bottleneck to more precisely determine (relative to threshold-based truncation) which tokens are in the support of the true distribution (Theorem~\ref{thm:botsamp}).
At a high level, the idea is to declare a token to be in the support if its probability is nonzero not only in the predicted distribution but also in \emph{all} distributions that are ``similar'' to it (in a precise technical sense) from the perspective of the softmax matrix.
This presents a more nuanced strategy compared to threshold-based truncation sampling: our algorithm does not rely on a threshold, thereby allowing higher probability tokens to be discarded while keeping some lower-probability tokens.

We conduct a pilot investigation (\S\ref{sec:bat-experiments}) to empirically evaluate this basis-aware truncation sampling approach. 
Our results shows improvements on an open-ended generation task via both automatic and human evaluation metrics under low-entropy generation (i.e., close to greedy).
Figure~\ref{fig:fig1} illustrates our algorithm's more nuanced token selection strategy qualitatively (also see Figure~\ref{fig:unit-tests}).
Unlike threshold-based truncation methods (each shown with a dotted vertical line), our method can selectively discard low-quality tokens while still keeping high-quality but lower-probability tokens. This is accomplished by taking into account linear dependencies between token embeddings.\footnotemark

\footnotetext{Code for experiments: \url{https://github.com/mattf1n/basis-aware-threshold}.}

Overall our work provides theoretical insights which motivate a practical method and show how truncation sampling avoids errors in a language model by mitigating the softmax bottleneck.

\section{Background}
\label{sec:bg}

\paragraph{Autoregressive Language Models.}
Autoregressive language models (henceforth \emph{models}) are trained as next-word-predictors: given a prefix, the model assigns a probability to each token in a vocabulary of size $v$ as a prediction of which token comes next.
Given an input prefix, a model produces a vector $\bm{h}\in\mathbb{R}^d$, which we refer to as the \emph{hidden state}, and hyperparameter $d$ as the \emph{hidden size}.
The model then uses a linear map with matrix $\bm{W}\in\mathbb{R}^{v\times d}$ to obtain logits $\mW\vh\in\mathbb{R}^v$, to which it applies the softmax function to obtain a probability distribution over tokens in the vocabulary:
$$\hat{\vp}=\softmax(\mW\vh)=\frac{\exp(\mW\vh)}{\sum_{i=1}^v\exp(Wh)_i},$$
$\mW$ is commonly referred to as the \emph{softmax matrix} because it is applied directly before the softmax, or the \emph{embedding matrix}.
Generally models are trained to output the $\hat{\vp}$ that minimizes the cross entropy with the conditional true distribution\footnotemark{} $\vp^*$:
$\crossentropy(\vp^*,\hat\vp)=\sum_{i=1}^vp^*_i\log\hat{p}_i.$

\footnotetext{
  In the case of natural language, it is not entirely clear what the ``true'' distribution $\vp^*$ means exactly. 
  Nonetheless we can use the distribution from which internet text is implicitly sampled as a useful surrogate.
  Furthermore, since the true distribution is unknown, the loss for a particular prediction during training is estimated by setting $\vp^*$ to be the 1-hot vector indicating the gold token. 
}

\paragraph{Language Generation via Truncation Sampling.}
Language models can autoregressively generate text by sampling a token from $\hat\vp$ at each time step. 
Unfortunately, sampling directly from $\hat\vp$, i.e., ancestral sampling, often leads to quality issues with unnatural, low-probability tokens.
Truncation sampling aims to solve this issue post-hoc by choosing a subset of the vocabulary to sample from, setting all other tokens to have zero probability.
We focus on a class of truncation methods that select tokens by choosing a threshold at each timestep and truncating tokens with probability less than that threshold. 
This simple heuristic has been found to be effective and forms the basis of popular methods like nucleus (top-$p$)~\citep{Holtzman2020Nucleus} and top-$k$~\citep{fan2018hierarchical} sampling.

Prior work has introduced several heuristics for choosing truncation thresholds. 
For instance, the threshold can be fixed constant as in $\epsilon$ sampling, or chosen dynamically across different distributions, as in $\eta$, nucleus, top-$k$, and Mirostat sampling \citep{Basu2021Mirostat:AN}. 
$\eta$ sampling introduces the idea that the threshold should depend on the entropy of the distribution $H(\hat\vp)$ and sets the threshold\footnotemark{} to $\min(\eta,\sqrt{\eta}H(\hat\vp))$.
In the latter three, the threshold is chosen implicitly rather than explicitly, for instance, in nucleus sampling with parameter $p$, the threshold is 
$
\min\left\{\hat{p}_i \mid  i\in\{1,2,\ldots,v\},\sum_{\hat{p}_j\geq\hat{p}_i}\hat{p}_j\leq p\right\}
$.\footnote{Locally typical sampling \citep{meister2023locally} truncates based on probabilities' divergence from the the probability a word would have in the uniform distribution of the same entropy as the language model's conditional distribution, sometimes truncating the highest-probability words.}

\footnotetext{
\citet{Hewitt2022TruncationSA} instead set $\eta=\min(\varepsilon,\sqrt{\varepsilon}H(\hat\vp))$ for a parameter $\varepsilon$. We diverge for simplicity.
}

In the extreme case, truncating all but the most likely token results in greedy decoding.
Though this strategy makes it unlikely to sample a token outside the true support, it often results in degenerative patterns like repetition \citep{Holtzman2020Nucleus}.
Furthermore, even for modern language models that suffer less from greedy decoding traps, non-deterministic sample-based decoding is useful for generating multiple completions and for more ``creative'' generations.
Thus, the best choice of threshold must strike a balance between diversity (i.e., including as many tokens as possible in the set of candidates) and coherence (i.e., avoiding sampling tokens outside the true support).

\paragraph{The Softmax Bottleneck.}
The sources of the probability overestimation errors are likely many,
but one source of error is particularly compelling 
and well defined mathematically: 
the softmax bottleneck~\citep{Yang2017BreakingTS}. 
The softmax bottleneck refers 
to the limited expressivity of models with a small hidden size and large vocabulary. 
Recalling the notation from~\citet{Yang2017BreakingTS}, 
let $\mA\in\mathbb R^{v\times n}$ 
be the matrix where each entry $A_{i,j}=\log p^*(i\mid j)$ 
is the true log-probability
of token $i$ given a prefix $j$ from some set of $n>v$ prefixes.  
Also, let $\mW\in\mathbb R^{v\times d}$ be the softmax matrix for a model, and $\mH\in\mathbb R^{d\times n}$ 
be the matrix of model hidden states given each prefix. 
Finally, let $\mJ\in\mathbb{R}^{v\times n}$ be the all-ones matrix.
The rank of the model's log-probability matrix
\begin{equation}
  \label{eqn:EYM}
  \mA' = \log\softmax(\mW\mH) = \mW\mH- \mJ\mathrm{diag}(\log\sum_{i=1}^v\exp(\mW\mH)_i)
\end{equation}
is
at most $d+1$ because $\mW\mH$ has inner dimension $d$ and therefore rank at most $d$, 
and the subtrahend has identical rows and therefore has rank at most 1.
The rank of $\mA$ is at most $v$.
If the rank of $A$ is much larger than $d$, then $A'$ can be at best a low-rank approximation of $A$.
From the Eckart–Young–Mirsk (EYM) theorem for low-rank approximations, 
\begin{equation}
  \min_{\mA' : \mathrm{rank}(\mA')\leq d+1}\lVert \mA-\mA'\rVert_F^2=\sum_{i=d+2}^{v}\sigma_i^2
  \label{eq:eym}
\end{equation}
where $\lVert\rVert_F$ denotes the Frobenius norm, and each $\bm\sigma$ is the vector of singular values of $\mA$, ordered from largest to smallest.
Thus, there will always be some error in the model's log-probability estimations 
if there are more than $d+1$ linearly independent columns in $\mA$.
\citet{Yang2017BreakingTS} hypothesize that this is indeed the case. 

Despite these theoretical shortcomings, language models still seem to perform quite well. 
We hypothesize that the reason for this is that default truncation sampling is sufficient to approximately mitigate errors from the softmax bottleneck. For a deeper discussion, see Appendix~\ref{sec:related}.

\section{A Theoretical Explanation of Truncation Sampling}
\label{sec:truncation-works}

Given some textual context as input, let $\vp^*$ denote the true next-token distribution of the language and $\hat{\vp}$ the model's predicted next-token distribution. 
Intuitively, if the model's probability \emph{overestimation} could be additively upper bounded, i.e., if we could show that $\hat{p}_i \leq p^*_i + \tau$ for every token $i$, then this would yield a natural way to avoid sampling tokens not in the support of $p^*$: only sample tokens $i$ with $\hat{p}_i > \tau$ (which, along with the bound, would imply $p^*_i > 0$). 
This is exactly what truncation sampling does. 
However, a difficulty in motivating truncation sampling via this argument is that it is unclear how to derive such an additive upper bound on probability overestimation.

Our key observation is that $\mA'$ being a low-rank approximation of $\mA$ can be used to conclude that the model's log-probability \emph{underestimation} is non-zero but additively upper bounded. Indeed, assuming $\mA'$ is a reasonably good low-rank approximation of $\mA$, Equation~\ref{eqn:EYM} implies such an upper bound in the log-probability space, which yields a multiplicative upper bound in the probability space. 
We then combine this underestimation upper bound with basic properties of a probability distribution in order to derive the desired \emph{additive} upper bound on the model's probability \emph{overestimation}. 
Lastly, we show formally how this overestimation upper bound directly motivates truncation sampling.

\subsection{Bounding log-probability underestimation}

We begin by proving bounds on models' log-probability errors. 
Specifically, we find bounds on the maximum log-probability
underestimation error of the model,
$\max(\mA-\mA')$.
We focus exclusively on underestimation errors because log-probability
overestimation errors cannot be bounded above.\footnote{If we allow assigning zero probability to some tokens in some contexts (e.g., $p^*(\text{``ate''}\mid\text{``I went to the''})=0$), then the corresponding log-probability $-\infty$. Hence the estimation error, unless it's $0$, will be unbounded.}

\paragraph{Maximum log-probability error upper bound.}
We begin by upper-bounding all model's log-probability underestimations.
In particular, the underestimation errors $\mA-\mA'$ are upper-bouded by $\max (\mA - \mA') \leq \max \mA - \min \mA' \leq - \min \mA'$, where the last inequality holds because $\max \mA$ is a log-probability and hence upper-bounded by $0$.
In other words, the negative minimum log-probability prediction $\min \mA'$ upper bounds all underestimation. 
As an example, a uniform predicted distribution
underestimates the log-probability of a token by at most $-\log(1/v)$. 

\paragraph{Maximum log-probability error lower bound.}
Next, we lower-bound maximum underestimation errors
by showing that they are strictly positive.
We conjecture that this lower-bound on error is loose,
i.e., that the maximum error is bounded away from $0$,
depending on the singular values of $\mA$. 

\subsection{Bounding probability overestimation}

Having established bounds on maximum \emph{log-probability underestimation},
we now show that assuming such an upper bound 
implies an additive upper bound on maximum \emph{probability overestimation}.
As before, fix some input textual context and let $\vp^*$ and $\hat{\vp}$ denote the true and model's predicted next-token distributions, respectively, for that context. 

\begin{thm}
\label{thm:trunc}
  If $\log \hat{\vp}$ underestimates $\log \vp^*$ by $\leq \delta$, then $\hat{\vp}$ overestimates $\vp^*$ by at most $1-\exp(-\delta)$.
\end{thm}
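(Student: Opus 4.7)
The plan is to translate the hypothesis on log-probability underestimation into a multiplicative lower bound on $\hat{\vp}$, and then exploit the fact that both $\vp^*$ and $\hat{\vp}$ sum to $1$ to convert this multiplicative lower bound on the ``other'' coordinates into an additive upper bound on any single coordinate. The proof should be a short algebraic argument; the only subtlety is bookkeeping around which direction of inequality one obtains in each step.

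First I would unpack the hypothesis. Saying that $\log\hat{\vp}$ underestimates $\log\vp^*$ by at most $\delta$ means that for every token $i$, $\log p^*_i - \log \hat{p}_i \leq \delta$, which is equivalent to $\hat{p}_i \geq \exp(-\delta)\, p^*_i$. Next, fix an arbitrary token $j$ and use the fact that $\hat{\vp}$ is a probability distribution to write $\hat{p}_j = 1 - \sum_{i \neq j} \hat{p}_i$. Applying the coordinatewise lower bound to every term in the sum and then using $\sum_{i \neq j} p^*_i = 1 - p^*_j$ yields
\begin{equation*}
\hat{p}_j \;\leq\; 1 - \exp(-\delta)\sum_{i \neq j} p^*_i \;=\; 1 - \exp(-\delta)\,(1 - p^*_j).
\end{equation*}
Subtracting $p^*_j$ from both sides and rearranging gives $\hat{p}_j - p^*_j \leq (1 - p^*_j)\bigl(1 - \exp(-\delta)\bigr)$, and the bound $p^*_j \geq 0$ immediately produces $\hat{p}_j - p^*_j \leq 1 - \exp(-\delta)$, as desired.

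Since this holds for every $j$, it gives the claimed uniform additive upper bound on overestimation. The conceptual content is a conservation argument: if no coordinate can be too small multiplicatively, then, because total mass is pinned at $1$, no coordinate can be too large additively. The main ``obstacle'' is really just being careful about signs and directions in the log-to-probability translation; there is no genuine technical difficulty, which is presumably why the authors deploy this lemma as the bridge between the log-space low-rank analysis of the softmax bottleneck and the probability-space justification of threshold truncation.
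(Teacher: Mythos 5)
Your proposal is correct and follows essentially the same argument as the paper's proof: both convert the log-underestimation bound into the multiplicative bound $\hat{p}_i \geq p^*_i \exp(-\delta)$, then use the normalization of both distributions to turn the lower bounds on the other coordinates into the additive bound $\hat{p}_j - p^*_j \leq (1-\exp(-\delta))(1-p^*_j) \leq 1-\exp(-\delta)$. There is no substantive difference between the two derivations.
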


See Appendix~\ref{app:proofs} for a proof. Note that the precondition $\log p^*_i - \log \hat{p}_i \leq \delta$ implies $\hat{p}_i \geq p^*_i \exp(-\delta)$. Intuitively, since $\hat{p}$ is a valid probability distribution summing to $1$, if it cannot underestimate token probabilities beyond a factor of $\exp(-\delta)$, then it also cannot overestimate other tokens' probabilities beyond a certain additive factor. We compute this additive factor and find it to be $1 - \exp(-\delta)$.

\subsection{Explaining truncation sampling}

Recall that threshold-based truncation sampling works by only sampling tokens with probability greater than some threshold $\tau$.
Sampling methods that choose a different $\tau$ 
at every time step can be viewed as additional heuristics 
for guessing when model outputs will have smaller errors. Theorem~\ref{thm:trunc} provides a direct explanation for why threshold-based truncation sampling might be successful:

\begin{cor}[Threshold-based truncation works]
  Suppose $\log \hat{\vp}$ underestimates $\log \vp^*$ by at most $\delta$. Then, for any threshold $\tau \geq 1-\exp(-\delta)$, threshold-based truncation sampling correctly discards all tokens that are not in the support of $\vp^*$.
  \label{cor:truncation-works}
\end{cor}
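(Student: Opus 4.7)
The plan is to derive the corollary as an essentially immediate specialization of Theorem~\ref{thm:trunc} to tokens outside the support of $\vp^*$. The conclusion of Theorem~\ref{thm:trunc}, unpacked pointwise, says that for every token $i$ in the vocabulary we have $\hat{p}_i - p^*_i \leq 1 - \exp(-\delta)$. This is exactly the additive overestimation bound we need in order to turn the hypothesis on $\tau$ into a useful statement about truncation.

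The key step is to apply this pointwise bound to a token $i \notin \mathrm{supp}(\vp^*)$, which by definition satisfies $p^*_i = 0$. Substituting, we obtain $\hat{p}_i \leq 1 - \exp(-\delta)$. Combining this with the hypothesis $\tau \geq 1 - \exp(-\delta)$ yields $\hat{p}_i \leq \tau$, so any threshold-based truncation method that discards tokens with predicted probability at most $\tau$ will indeed discard $i$. Thus no token outside the true support survives truncation.

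The one subtlety worth flagging, rather than an obstacle, is the convention at the boundary $\hat{p}_i = \tau$: strict vs.\ non-strict discarding. This can be resolved either by phrasing threshold truncation as removing tokens with $\hat{p}_i \leq \tau$, or by slightly tightening the hypothesis to $\tau > 1 - \exp(-\delta)$; in either case the same two-line argument goes through. Since Theorem~\ref{thm:trunc} has already done the real work of converting log-underestimation into probability-overestimation, there is no genuine obstacle to proving Corollary~\ref{cor:truncation-works}; its value lies in making explicit the link between an abstract error bound on $\log \hat{\vp}$ and the concrete heuristic of threshold truncation.
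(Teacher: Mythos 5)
Your proof is correct and is essentially the argument the paper intends: the corollary is presented as an immediate consequence of Theorem~\ref{thm:trunc}, obtained by setting $p^*_i=0$ in the overestimation bound to get $\hat p_i \le 1-\exp(-\delta)\le\tau$, so the token is discarded under the paper's convention of sampling only tokens with $\hat p_i>\tau$. Your remark on the boundary case is already resolved by that convention.
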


Furthermore, based on the above proof, we present an alternative formulation of truncation sampling.
\begin{cor}
    [Threshold sampling reformulation]
    For a model with maximum log-probability underestimation error $\delta$, if the model outputs $\hat\vp$ and there is no distribution $\vp$ 
    with $p_i=0$ such that
    $p_j\leq\hat{p}_j\exp(\delta)$ for $j\in\{1,2,\ldots,v\}$,
    then $p^*_i>0$.
    \label{cor:strong}
\end{cor}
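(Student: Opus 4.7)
The plan is to proceed by contrapositive. Assuming the conclusion fails, i.e., $p^*_i = 0$, I would exhibit an explicit distribution $\vp$ witnessing the hypothesis, namely $\vp = \vp^*$ itself, and verify it meets all the stated conditions. The corollary then follows immediately.

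First I would unpack the precondition. Having maximum log-probability underestimation error $\delta$ means that for every token $j$ we have $\log p^*_j - \log \hat{p}_j \leq \delta$, which exponentiates to the pointwise multiplicative bound $p^*_j \leq \hat{p}_j \exp(\delta)$ for all $j \in \{1, 2, \dots, v\}$. This bound is exactly the form of the constraint appearing in the statement, so $\vp^*$ automatically satisfies it.

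Next I would verify the other two conditions on the candidate witness $\vp^*$: it is a valid probability distribution (by definition), and under the contrapositive assumption $p^*_i = 0$, it has a zero in coordinate $i$. Thus $\vp^*$ itself is a distribution with $p_i = 0$ satisfying $p_j \leq \hat{p}_j \exp(\delta)$ for all $j$, contradicting the hypothesis that no such distribution exists. Hence $p^*_i > 0$.

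There is no real obstacle here; the statement is essentially a direct repackaging of the bound underlying Corollary~\ref{cor:truncation-works} into a form that is agnostic to any single scalar threshold. The conceptual content, which I would briefly emphasize in the write-up, is that the set of candidate ``plausible'' true distributions is now described by a polytope defined by the multiplicative constraint $p_j \leq \hat{p}_j \exp(\delta)$ together with the simplex constraints, and a token $i$ can be safely kept whenever that polytope has empty intersection with the coordinate hyperplane $\{p_i = 0\}$. This is exactly the geometric picture the subsequent basis-aware truncation section will refine using the softmax matrix.
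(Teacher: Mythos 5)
Your proof is correct and is essentially identical to the paper's own argument: the paper also notes that the corollary is the contrapositive of the observation that if $p^*_i=0$ then $\vp^*$ itself witnesses the inequality constraints, which follow from the bound $\hat{p}_j \geq p^*_j\exp(-\delta)$ established in the proof of Theorem~\ref{thm:trunc}. No gaps.
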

This follows directly from Equation~(\ref{eq:delta}) from the proof in the appendix, and is the contrapositive of the more straightforward statement that if $p^*_i=0$ then there exists a distribution satisfying inequality conditions in the corollary, namely $\vp^*$.
One can check that only sampling tokens based on Corollary~\ref{cor:strong}
yields the same candidate sets as threshold sampling with $1-\exp(-\delta)$ as the parameter.
This alternative formulation will become useful later on when we combine methods for proving certain tokens are in the support.

\section{Directly addressing errors from the softmax bottleneck}
\label{sec:basis-aware}

As we have seen,
we can arrive at truncation sampling
by making an assumption about the log-probability errors,
which allows us to prove that certain tokens 
have true probability greater than zero.
However, truncating via a threshold is an inherently limited approach: if a model assigns more probability to a low quality token than a high quality token, then there is no threshold that discards the low-quality token without discarding the high quality token. 
Na\"ively, it would seem that this type of issue is unsolvable,
however, it turns out that if this error was is caused by the softmax bottleneck, we can actually recover the high quality token without risking sampling the low-quality token. 
By exploiting the $\mW$, the low-rank basis for the model's outputs, and we can deduce exactly which tokens may have errors due to the softmax bottleneck, regardless of their relative probability.
In this section we show mathematically how we can extend threshold sampling 
to take full advantage of our knowledge of the softmax bottleneck.

\subsection{Basis-aware sampling}

\begin{figure}
\begin{minipage}[t]{0.49\linewidth}
  \centering
  \small
  \begin{tikzpicture}
  \begin{scope}[shift={(1,0)}]
    \coordinate (a1) at (0,0);
    \coordinate (b1) at (0,2);
    \node at (a1) [circle,fill,inner sep=1pt] {};
    \node at (b1) [circle,fill,inner sep=1pt] {};
    \node at (0,1) [circle,fill,inner sep=1pt] {};
    \draw[<->] (0,-0.2) -- (0,2.2);
    \node[anchor=south] (R) at (0,2.5) {$\mathbb R$};
  \end{scope}
  \begin{scope}[shift={(1.5,0)}]
    \node[anchor=south] (R3) at (1,2.5) {$\mathbb R^3$};
    \begin{axis}[
        width=4cm,
        height=4cm,
        xmax=1,
        ymax=1,
        zmax=1,
        xmin=-1,
        ymin=-1,
        zmin=-1,
        ticks=none,
        tick style={draw=none},
        % title=$\mathbb R^3$
      ]
      \coordinate (a2) at (-0.55, -0.71, -0.29);
      \coordinate (b2) at (0.55, 0.71, 0.29);
      \node at (a2) [circle,fill,inner sep=1pt] {};
      \node at (b2) [circle,fill,inner sep=1pt] {};
      \node at (0,0,0) [circle,fill,inner sep=1pt] {};
      % \draw[->] (0,0,0) -- (.4,0,0);
      % \draw[->] (0,0,0) -- (0,.4,0);
      % \draw[->] (0,0,0) -- (0,0,.4);
      \addplot3 [<->, domain=-1.4:1.4,samples=10, samples y=0] ({x*0.55}, {x*0.71}, {x*0.29});
    \end{axis}
  \end{scope}
  \begin{scope}[shift={(4,0)}]
    \node[anchor=south] (Delta) at (1.3,2.5) {$\Delta_3$};
    \begin{ternaryaxis}[
        width=4cm,
        ternary limits relative=false,
        grid=none,
        xtick={0,1},
        ytick={0,1},
        ztick={0,1},
        % title=$\Delta_3$
      ]
      \coordinate (a3) at (
      {exp(-2*0.55)/(exp(-2*0.55)+exp(-2*0.71)+exp(-2*0.29))}, 
      {exp(-2*0.71)/(exp(-2*0.55)+exp(-2*0.71)+exp(-2*0.29))}, 
      {exp(-2*0.29)/(exp(-2*0.55)+exp(-2*0.71)+exp(-2*0.29))}
      );
      \coordinate (b3) at (
      {exp(2*0.55)/(exp(2*0.55)+exp(2*0.71)+exp(2*0.29))}, 
      {exp(2*0.71)/(exp(2*0.55)+exp(2*0.71)+exp(2*0.29))}, 
      {exp(2*0.29)/(exp(2*0.55)+exp(2*0.71)+exp(2*0.29))}
      );
      \node at (a3) [circle,fill,inner sep=1pt] {};
      \node at (b3) [circle,fill,inner sep=1pt] {};
      \node at ({1/3},{1/3},{1/3}) [circle,fill,inner sep=1pt] {};
      \addplot3 [<->, domain=-7:7,samples=20, samples y=0] (
      {exp(x * 0.55)/(exp(x*0.55)+exp(x*0.71)+exp(x*0.29))}, 
      {exp(x * 0.71)/(exp(x*0.55)+exp(x*0.71)+exp(x*0.29))}, 
      {exp(x * 0.29)/(exp(x*0.55)+exp(x*0.71)+exp(x*0.29))}
      );
    \end{ternaryaxis}
  \end{scope}
  \draw[dotted] (a1) to[out=east,in=north west] (a2);
  \draw[dotted] (b1) to[out=east,in=north west] (b2);
  \draw[dotted] (a2) to[out=south east,in=north] (a3);
  \draw[dotted] (b2) to[out=south east,in=north] (b3);
  \draw[->] (R) -- node [midway, above] {$W$} (R3);
  \draw[->] (R3) -- node [midway, above] {$\softmax$} (Delta);
\end{tikzpicture}
  \caption{
    For a toy model with hidden size $1$, vocabulary size $3$,
    and an embedding matrix $\mW\in\mathbb{R}^{3\times1}$, 
    $\mW$ projects the space of possible hidden states $\mathbb R$
    into a 1-dimensional subspace 
    of the space of possible logits $\mathbb R^3$.
    In turn, the softmax function projects this 1D logit subspace 
    onto a 1D subspace of 
    the space $\Delta_3$ of possible probability distributions over 3 tokens.
    Thus, our toy model can only output distributions within 
    a 1D subspace of $\Delta_3$,
    which is the image of $\softmax\circ\mW$.
  }
  \label{fig:toy}
\end{minipage}
\hfill
\begin{minipage}[t]{0.49\linewidth}
  \centering
  \small
  \input{fig/tern.pgf}
  \caption{
    If the model outputs $\hat\vp$ (the blue dot) within the space of possible outputs (blue line),
    then each token $i$ might have zero true probability 
    only if there is a distribution $\vp$ 
    with $p_i=0$ that satisfies both the BA constraints (orange line)
    and the truncation constraints (orange area). 
    For example, the orange line and area coincide at the green dot where $p_1=0$, therefore token 1 might have zero true probability.
    The other tokens must have nonzero true probability since there are no other such solutions.
  }
  \label{fig:tern}
\end{minipage}
\end{figure}

At a high level, we will motivate this approach by showing that the function used to transform the hidden state $\vh$ to a probability distribution $\hat\vp$ restricts model's outputs to a subset of the possible probability distributions.
When the true distribution $\vp^*$ lies outside of this set, then we can expect the model to output the $\hat\vp$
within the set that minimizes the model's training loss with respect to $\vp^*$.
We can exploit this property to identify the set of distributions wherein the true distribution lies,
namely the set of distributions that $\hat\vp$ minimizes loss with. 
If no distributions within this set assign zero probability to a particular token, then that token must have nonzero probability.

To build intuition for how a model's outputs are restricted consider the toy model in Figure~\ref{fig:toy}. 
We generalize this toy model to a model with hidden size $d$ and vocabulary size $v$
by observing that the composed functions $\softmax\circ\mW$ define a linear map:
first, the model's softmax matrix $\mW\in\mathbb{R}^{v\times d}$ defines a linear map $\mathbb{R}^d\to\mathbb{R}^v$.
Next, it is a lesser-known fact that the softmax function is a linear map from $\mathbb{R}^v\to\Delta_v$, where $\Delta_v$ is the $(v-1)$-dimensional vector space of valid probability distributions over $v$ variables~\citep{Aitchison1982TheSA}.
Therefore, $\softmax\circ\mW:\mathbb{R}^d\to\Delta_v$ is a linear map from a $d$-dimensional space to a $(v-1)$-dimensional space, meaning the image of this function is an at-most $d$-dimensional subspace of $\Delta_v$.
In other words, the space of model outputs is restricted to a subset of all possible probability distributions over the vocabulary.

What distribution should a model output, given that the true distribution $\vp^*$ may not lie in the subspace of possible outputs?
Typically, language models are trained to minimize cross-entropy with the true distribution.
Therefore, a well-trained model can be expected to output the distribution $\hat\vp$ within the image of $\softmax\circ\mW$
that minimizes cross-entropy with $\vp^*$.
In other words, we assume that the model will produce the hidden state $\vh$ such that 
$\crossentropy(\softmax(\mW\vh),\vp^*)$ is minimized. 
The key insight of our method is that
if $\vh$ does not minimize cross entropy 
with \emph{any} distribution $\vp$ such that $p_i=0$, then $p^*_i\neq0$, i.e., token $i$ is in the true support.

\begin{thm}[Basis-aware sampling]
  \label{thm:botsamp}
  If $\hat\vp$ is the predicted distribution 
  from a cross-entropy-minimizing model with embedding matrix $\mW$,
  and if there is no valid probability distribution $\vp$ 
  such that $p_i=0$ and $\mW^T\vp=\mW^T\hat\vp$,
  then the token's true probability $p^*_i$ is greater than $0$.
\end{thm}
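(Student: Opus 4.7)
The plan is to prove the contrapositive: assuming $p^*_i = 0$, exhibit a valid probability distribution $\vp$ satisfying $p_i = 0$ and $\mW^T \vp = \mW^T \hat\vp$. The obvious candidate is $\vp^*$ itself, which is automatically a valid probability distribution and, by the contrapositive hypothesis, has $p^*_i = 0$. The only nontrivial content is showing that $\mW^T \vp^* = \mW^T \hat\vp$, which I expect to fall out of the first-order optimality condition for cross-entropy minimization over the image of $\softmax \circ \mW$.

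First I would unpack the cross-entropy-minimizing assumption: the hidden state $\vh$ producing $\hat\vp = \softmax(\mW\vh)$ is a stationary point of $L(\vh) = \crossentropy(\vp^*,\softmax(\mW\vh)) = -\sum_j p^*_j (\mW\vh)_j + \log\sum_k\exp((\mW\vh)_k)$. Differentiating and using the standard identity $\nabla_{\vh}\log\sum_k\exp((\mW\vh)_k) = \mW^T\softmax(\mW\vh) = \mW^T\hat\vp$, the gradient simplifies to
\begin{equation*}
  \nabla_\vh L(\vh) \;=\; \mW^T\bigl(\hat\vp - \vp^*\bigr).
\end{equation*}
Setting this to zero yields the key identity $\mW^T\hat\vp = \mW^T\vp^*$.

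With this identity in hand, the contrapositive argument is immediate: suppose for contradiction that $p^*_i = 0$. Then $\vp^*$ is itself a valid probability distribution with $i$-th coordinate zero and with $\mW^T\vp^* = \mW^T\hat\vp$, contradicting the hypothesis that no such $\vp$ exists. Hence $p^*_i > 0$ (recall $p^*_i \geq 0$ always, so ruling out equality with zero suffices).

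The hard part, insofar as there is one, will be making the ``cross-entropy-minimizing'' assumption precise enough for the first-order condition to apply: strictly speaking, the image of $\softmax\circ\mW$ is not closed, so one should phrase the assumption either as the hidden state $\vh$ being an unconstrained minimizer of $L$ (so that the Euclidean gradient vanishes) or as $\hat\vp$ being a stationary point on the manifold of attainable distributions (where the tangent space is the column span of $\mW$ after softmax-linearization). Either formulation yields the same stationarity equation $\mW^T(\hat\vp - \vp^*) = \vzero$, so the remainder of the proof is unchanged. I would briefly note this caveat and then conclude.
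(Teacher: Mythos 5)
Your proposal is correct and follows essentially the same route as the paper's proof: differentiate the cross-entropy loss with respect to $\vh$, use stationarity to obtain $\mW^T\hat\vp = \mW^T\vp^*$, and then observe that $\vp^*$ itself would be the forbidden witness if $p^*_i = 0$. Your version is just a more compact derivation of the same gradient identity, plus a sensible caveat about formalizing the stationarity assumption that the paper glosses over.
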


See proof in Appendix~\ref{app:proofs}. This gives us a new way to prove that tokens are in the true support, similar to Corollary~\ref{cor:strong}, but in a way that directly compensates for errors due to the softmax bottleneck.

\subsection{Combining sampling methods}

Theorem~\ref{thm:botsamp} and Corollary~\ref{cor:strong} equip us with methods for proving tokens are in the true support.
By combining the constraints specified from each method we can create a hybrid proof strategy to take advantage of both methods' insights.
In particular, if there does not exist a distribution $\vp$ with $p_i=0$ such that $p_j\leq\hat{p}_j\exp(\delta)$ for all $j$ (the truncation constraint) \emph{and} $\mW^T\vp=\mW^T\hat\vp$ (the basis-aware constraint), then $p^*_i>0$.

This hybrid proof strategy naturally yields a sampling method: 
sample only tokens that we can prove are in the support. 
We call this method \emph{basis-aware threshold} (BAT) sampling.
Fortunately, both the threshold constraint and basis-aware (BA) constraints are linear,
so we can use an off-the-shelf linear programming optimizer to verify whether a token is in the support.
Concretely, if the optimizer determines that there does not exist a feasible solution $\vp\in\mathbb{R}^v$ such that:
\begin{equation}
  \label{eq:prog}
  p_i=0, \quad
  \sum_{j=1}^vp_j = 1, \quad
  \forall j: 0\leq p_j\leq \hat p_j\exp(\delta), \quad
  \mW^T\vp = \mW^T\hat\vp,
\end{equation}
then $p^*_i>0$.
Thus, our sampling strategy can be: sample a token $i$ according to the model's output probabilities;
if the optimizer finds a solution to (\ref{eq:prog}),
reject the token and re-sample;
otherwise accept.

We expose $\delta$ as a parameter to tune the restrictiveness of the sampling method. For large $\delta$, BAT becomes more like greedy sampling, and for small $\delta$, more like ancestral sampling. 
The value of $\delta$ can be chosen on a per-context basis
using any threshold sampling heuristic, be it $\epsilon$, $\eta$, or nucleus sampling.
Given a threshold $\tau$ from the heuristic, set $\exp\delta=1/(1-\tau)$.
We call these variants of BAT sampling BA-$\epsilon$, BA-$\eta$, an BA-nucleus sampling.

\paragraph{A toy example.} 
Suppose our model has hidden size $1$, vocabulary size $3$, and embedding matrix 
$W^T=
\begin{bmatrix}
  0.55 & 0.71 & 0.29
\end{bmatrix}$. 
We employ the truncation sampling assumption 
that our model's output distributions are somewhat close
to the true distribution by 
saying $p^*_i \leq \hat{p}_i\exp\delta$ and choosing $\delta=\log1.9$ so that $p^*_i\leq1.9 \hat{p}_i$ for all tokens $i$. 
Additionally, assume the model's outputs minimize cross-entropy with the true distribution, 
i.e., $W^Tp^*=W^T\hat{p}$ for all $\hat{p}$.
Now suppose our model outputs $h=[2.55]$. 
The output distribution is therefore $\hat{p}=\softmax(Wh)=
\begin{bmatrix}
  0.33 & 0.50 & 0.17
\end{bmatrix}^T$.

Our strategy
only samples tokens for which we can prove 
that the true probability is positive.
Referring to Figure~\ref{fig:tern}, 
we see that there are no probability distributions $p$ that satisfy our assumptions with $p_2=0$ or $p_3=0$.
However, $p=\begin{bmatrix}0&0.70&0.30\end{bmatrix}$ \emph{does} satisfy our assumptions.
Therefore, if we sample token 1 we should reject it, as we only have evidence that $p^*_2\neq0$ and $p^*_3\neq0$.
Notice that this strategy is non-monotonic: $\hat{p}_1>\hat{p}_3$, but we only reject token 1, not token 3.

\paragraph{Basis-aware threshold sampling in practice.}
The proposed implementation of basis-aware sampling requires solving rather large linear programs, which tends to be too computationally expensive to be practical, even when using proprietary solvers. The long run times can mainly be attributed to the size of $\mW$. To make  BAT feasible in practice, we approximate the full solution by discarding the majority of the constraints in such a way that 
no additional tokens are accepted and the set of rejected tokens minimally increases. Briefly, instead of using $\mW$ in the linear program, we use the $c$ most important columns in the singular value decomposition (SVD) of $\mW$. 
More details are deferred to Appendix~\ref{app:ba-in-practice}.
This reduces the number of constraints from $d$ ($\approx700$-$1200$) to $c$ (typically $20$), and shortens the run time from over a minute on a proprietary solver to about a second. 
We can further reduce the generation run time by observing that whenever a token has probability greater than $1-\exp(-\delta)$ we can safely accept it without running the program, since the program will be infeasible.  
Since high-probability tokens are most likely to be sampled, the program only needs to run once every few samples. The amortized cost of BAT sampling comes to only about $0.1$ seconds per token if the program runs every 10 samples, which is typical.

\section{Pilot experiments with basis-Aware truncation}
\label{sec:bat-experiments}

We conduct several evaluations with GPT-2 to pilot BAT sampling as a viable alternative to threshold sampling.
While more powerful language models exist, these models suffice since we are primarily interested in testing the effect of the BAT sampling on performance under controlled settings. 

As baseline methods for comparison, we select $\eta$, $\epsilon$, and nucleus sampling.
We also use $\eta$ and $\epsilon$ as methods for selecting the $\delta$ parameter at each time step for BAT sampling. 
In preliminary experiments, we also tried BA-nucleus, but found it to be significantly worse. 
One possible intuition for why is that the methods for choosing the threshold $\epsilon$ and $\eta$ are similar to the formulation of threshold sampling used to develop BAT.
Nucleus sampling on the other hand determines the threshold using a function that is somewhat inconsistent with our framework.

We evaluate models on open-ended generation using both human annotators and automatic metrics.
For each model and sampling setting, we generate completions for 5000 35-token prefixes taken from the Open Web Text (OWT)~\citep{Gokaslan2019OpenWeb}.
We use OWT because it comes from a similar distribution to GPT-2's training data.
We report MAUVE~\citep{pillutla2021mauve} similarity between human text and generated text for parameter selection and automatic evaluation. 

\paragraph{Parameter Selection and Evaluation.}
We perform a parameter sweep for nucleus, $\eta$, and $\epsilon$ sampling and select the parameter that gives the highest MAUVE score on the OWT validation set (see Table~\ref{tab:params} in the appendix). 
We control for the parameter choice in comparisons between BAT methods and their vanilla counterparts, 
by matching the parameters by selecting the BAT parameter that rejects the same proportion of tokens from corpus of human text as the vanilla method; see Appendix~\ref{sec:param_select} for more details.
Using these parameters, we generate completions on the OWT test set for automatic evaluation with MAUVE and human evaluation.

\subsection{Qualitative, automatic, and human evaluation}

\begin{figure}[htbp]
  \centering
  \includegraphics[width=\textwidth]{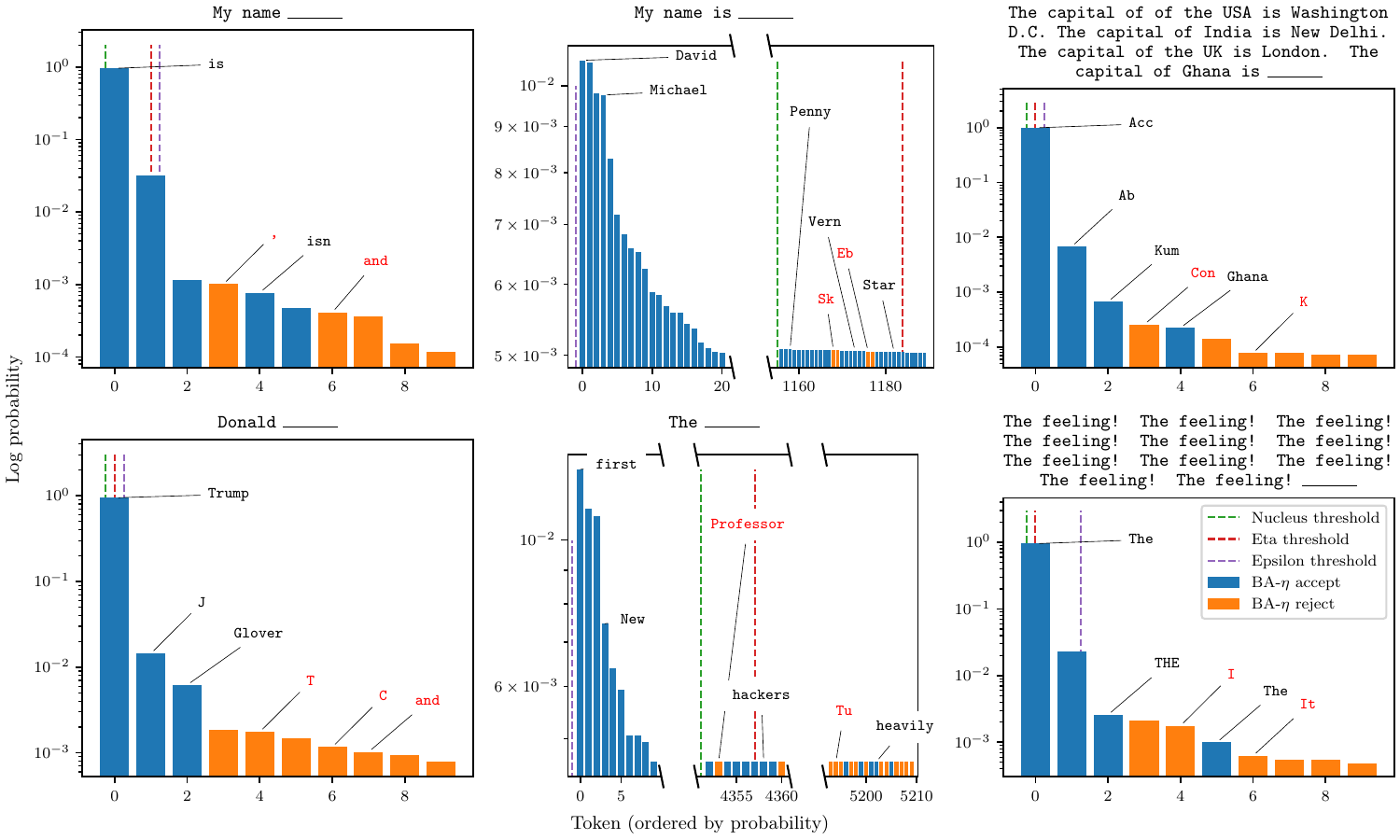}
  \caption{Additional qualitative examples, following the same setup as Figure~\ref{fig:fig1}.
  }
  \label{fig:unit-tests}
\end{figure}

\paragraph{Qualitative analysis}
Figure~\ref{fig:unit-tests} shows the effects of truncation methods on the next-token distributions from 6 prefixes, drawn from \cite{Hewitt2022TruncationSA}.
Unlike threshold sampling methods, BAT can reject low-quality high-probability tokens while accepting high-quality low-probability tokens. 

\begin{figure}[tb]
    \centering
    \pgfplotsset{style a/.style={error bars/.cd, y dir=both, y explicit}}
\pgfplotsset{
	/pgfplots/bar cycle list/.style={
		/pgfplots/cycle list={
			{blue,fill=blue!30!white,mark=none},
			{red,fill=red!30!white,mark=none},
			{brown!60!black,fill=brown!30!white,mark=none},
			{black,fill=black!30!white,mark=none},
			{cyan,fill=cyan!30!white,mark=none},
			{orange,fill=orange!30!white,mark=none},
		}, 
		},
		}

\begin{tikzpicture}
	\begin{axis} [
			ybar,
			symbolic x coords={Small, Medium, Large, XL},
			xtick=data,
			legend pos=south east,
			legend cell align=left,
			enlarge x limits=0.25,
			bar width=4pt,
            xlabel=Model size,
            ylabel=MAUVE,
            height=4cm,
            width=0.6\textwidth,
            legend pos=outer north east,
		]
		% \addplot +[style a] table [col sep=comma, x=Size, y=Mean, y error=SEM] {data/test/Bottleneck-aware.csv};
		\addplot +[style a] table [col sep=comma, x=Size, y=Mean, y error=SEM] {data/test/BA_Eta.csv};
		\addplot +[style a] table [col sep=comma, x=Size, y=Mean, y error=SEM] {data/test/Eta.csv};
		\addplot +[style a] table [col sep=comma, x=Size, y=Mean, y error=SEM] {data/test/Nucleus.csv};
		\addplot +[style a] table [col sep=comma, x=Size, y=Mean, y error=SEM] {data/test/BA_Epsilon.csv};
		\addplot +[style a] table [col sep=comma, x=Size, y=Mean, y error=SEM] {data/test/Epsilon.csv};
		\legend{BA-$\eta$, $\eta$, Nucleus, BA-$\epsilon$, $\epsilon$}
	\end{axis}
\end{tikzpicture}
    \caption{
    MAUVE scores for sampling methods on Open Web Text test set.
    No single sampling method consistently outperforms across sizes.
    BA-$\eta$ performs remarkably well for GPT-2-Large.
    }
    \label{fig:test}
\end{figure}

\paragraph{BA-$\eta$ outperforms all other methods for GPT-2-Large.}
We compare the MAUVE scores on OWT for each method and model size in Figure~\ref{fig:test}.
The results show that no single method consistently performs best, with BAT methods sometimes out-performing and sometimes under-performing their vanilla counterparts.
We do, however, see that BA-$\eta$ outperforms $\eta$ sampling for the two larger model sizes, 
and does particularly well against all methods for GPT-2-Large.

\paragraph{BA-$\eta$ outperforms $\eta$ sampling in low-entropy decoding across model sizes.}
We compare BA-$\eta$ and $\eta$ sampling across different $\eta$ parameters, again matching our BA-$\eta$ parameter to reject the same proportion of human text as the $\eta$ parameter.
As shown in Figure~\ref{fig:hrr}, we find that for more restrictive sampling (i.e., larger $\eta$, closer to greedy decoding), 
BA-$\eta$ consistently outperforms $\eta$ sampling. 
To verify our results (since we know from Figure~\ref{fig:test} that model size effects which method is best) we show in Table~\ref{tab:relcon} that this pattern holds across all model sizes.

\begin{figure}
\begin{minipage}[t]{0.48\linewidth}
  \centering
  \resizebox{0.84\linewidth}{!}{
    \input{fig/hrr.pgf}
  }
  \vspace{-2.5ex}
  \caption{
  MAUVE scores for GPT-2-XL with BA-$\eta$ and $\eta$ sampling for different $\eta$. 
  Under lower-entropy generation (i.e., closer to greedy), BA-$\eta$ consistently outperforms $\eta$ sampling.
  }
  \label{fig:hrr}
\end{minipage}
\hfill
\begin{minipage}[t]{0.48\linewidth}
  \centering
  \resizebox{0.84\linewidth}{!}{
    \input{fig/constraints.pgf}
  }
  \vspace{-2.5ex}
  \caption{
    MAUVE scores for GPT-2-XL as the number of BA constraints varies. 
    BAT sampling improves with more constraints.
  }
  \label{fig:constraints}
\end{minipage}
\end{figure}

\paragraph{More constraints improves BAT.}
Since we reduce the number of constraints in the linear program to make it run quickly,
we can add constraints back into to program 
to verify that the basis-aware constraints are the reason for the gains in BAT sampling.
We again adjust the BAT parameter to match the proportion of rejected human text 
to control for the additional tokens added to the support from the new constraints.
Figure~\ref{fig:constraints} shows that adding more BA constraints indeed increases the MAUVE score
for our method. 
This is direct evidence that controlling for the softmax bottleneck helps reduce errors in the model distribution.

\paragraph{Human annotators narrowly favor BA-$\eta$ and prefer coherence to diversity.}
To support our automatic evaluations, we additionally use human annotators from Amazon Mechanical Turk to compare both methods.
Annotators are tasked with pairwise comparisons between generations from each method and generated from the same prefix.
See Appendix \ref{sec:app-human-eval} for more details.
Table~\ref{tab:human} shows that, annotators narrowly prefer generations from BA-$\eta$ sampling to those from $\eta$ sampling. 
Furthermore we see that human annotators prefer lower entropy generations.
This is likely because humans only see 1 generation per method, making it impossible to assess diversity in the generations.
\begin{table}
\begin{minipage}[t]{0.49\linewidth}
  \caption{
    MAUVE scores for different GPT-2 model sizes on lower-entropy OWT generation.
    BA-$\eta$ sampling outperforms $\eta$ in each case.
  }
  \label{tab:relcon}
  \centering
  \small
  \resizebox{\textwidth}{!}{\begin{tabular}{lrrrr}
\toprule
Size & Small & Medium & Large & XL \\
Method &  &  &  &  \\
\midrule
$\eta$ & $85.0_{1.4}$ & $90.4_{0.1}$ & $86.0_{0.5}$ & $87.1_{1.2}$ \\
BA-$\eta$ & \boldmath $87.8_{1.0}$ & \boldmath $92.2_{0.6}$ & \boldmath $88.4_{0.5}$ & \boldmath $89.6_{0.4}$ \\
\bottomrule
\end{tabular}
}
\end{minipage}
\hfill
\begin{minipage}[t]{0.45\linewidth}
  \caption{
  Pairwise human evaluation results. 
  BA-$\eta\equiv x$  indicates the BA-$\eta$ parameter chosen to match $\eta=x$.
  }
  \label{tab:human}
  \centering
  \small
  \resizebox{\textwidth}{!}{\begin{tabular}{llrrr}
  \toprule
  Method 1 & Method 2 & 1 wins & 2 wins & Tie \\
  \midrule
  BA-$\eta\equiv0.002$ & $\eta=0.002$ & 0.43 & 0.38 & 0.19   \\
  BA-$\eta\equiv0.024$ & $\eta=0.024$ & 0.48 & 0.47 & 0.05 \\
  BA-$\eta\equiv0.024$ & BA-$\eta\equiv0.001$ & 0.50 & 0.42 & 0.08 \\
  \bottomrule
\end{tabular}
}
\end{minipage}
\end{table}

\subsection{Discussion}
Overall, our results provide empirical evidence that the softmax bottleneck is responsible for significant errors in language model next-token distributions, and show that BAT sampling offers a viable method for mitigating those errors.
Under low-entropy generation, BAT offers clear advantages to threshold sampling, where only a few tokens are permissible.

Although our pilot study shows promising results for BA-$\eta$ sampling in low-entropy generation settings, there remain a number of limitations.
For instance, as mentioned in \S\ref{sec:bat-experiments}, BAT does not pair well with nucleus sampling. Furthermore, we find that for certain prefixes and sufficiently low-entropy sampling parameters, BA-$\epsilon$ accepts no tokens. This is a non-issue for threshold sampling which can fall back to greedy sampling, but because BAT relies on rejection sampling, it is not known when to revert to greedy. Though it is possible to implement a max-retries guard, this remains computationally expensive and the generations themselves tend to degrade. 

A broader issue that BAT must deal with is the expensive computation associated with running the linear program.
While this is generally not an issue for generation, certain tasks are infeasible, such as finding the exact set of candidate tokens, which would require running the linear program on the full vocabulary.
We remain optimistic that further optimizations to the method can be made to allow this in future work, as well as enable BAT sampling with higher constraint counts.

\section{Conclusion}

Our work fills a crucial gap in the theoretical understanding of truncation sampling methods and how they account for language model errors. 
These theoretical findings translate into a more direct method for mitigating errors due to the softmax bottleneck. 
As a result, our BAT sampling method can discard higher-probability tokens while keeping higher-quality but lower-probability tokens. 
Lastly, our pilot study with BAT sampling shows promising results in low-entropy generation.

\bibliography{refs}
\bibliographystyle{iclr2024_conference}

\appendix

\section{Further related work}
\label{sec:related}

\paragraph{Generating from autoregressive language distributions.}
Generating strings from autoregressive, optionally conditional, generative models of language has a long history in NLP; for decades, algorithms were developed for approximating the maximum-likelihood string under the model \citep{jelinek1990self}, e.g., beam search \citep{Dept.2015}, under the understanding that there is one best output for, e.g., speech recognition.
As Transformers became used for general-purpose, and often high-entropy generation wherein one wants to be able to generate multiple good completions, it was found that search for the highest-likelihood strings led to low-quality generations \citep{fan2018hierarchical,Holtzman2020Nucleus,hashimoto2019unifying,stahlberg-byrne-2019-nmt,meister-etal-2020-beam}.
In developing algorithms for high-entropy generation, the afore-mentioned line of work attempts to maintain the learned distribution as much as possible \citep{Holtzman2020Nucleus,Hewitt2022TruncationSA}; another significant design principle has related to the \textit{uniform information density} principle that humans are observed to obey, motivating \cite{meister2023locally}.
This algorithm intentionally deviates from the overall distribution more, by sometimes truncating high-probability tokens in order to never generate any tokens that are \textit{too} high probability relative to the overall entropy.
\cite{krishna-etal-2022-rankgen} show that language models do not effectively make use of long-term context, finding that an explicitly trained re-ranker can help.
\cite{li-etal-2023-contrastive} hypothesizes that language model errors are distributed similarly in small models as in large, showing that taking the \textit{difference} of their logits can help improve the large model's generations.
Some ideas from high-entropy generation, including this, and the $\epsilon$-sampling algorithm, have shown to be useful even in low-entropy generation where previously algorithms like beam search have performed best \citep{freitag2023epsilon,sennrich2023mitigating}. %, showing that algorithms for better general-purpose language generation can have impact for specific tasks.

\paragraph{Did the softmax bottleneck turn out not to be a problem?}
After the demonstration of the softmax bottleneck by \cite{Yang2017BreakingTS}, various algorithms were proposed for efficiently learning a high-rank language models~\citep{yang2019mixtape,Ganea2019BreakingTS}. 
\citet{chang2022softmax} showed that the softmax bottleneck makes certain multi-mode distributions difficult to model, while \cite{demeter2020stolen} demonstrated that the low-rank nature of language models means that it is possible for certain word tokens to be \textit{unable} to be the argmax, but \cite{grivas2022low} demonstrated that this is rarely the case in practice.
Overall, rank considerations have not been at the fore of language model development, as language models have scaled, their hidden state sizes have scaled as well, but stayed smaller than their vocabulary sizes \citep{scao2022bloom,biderman2023pythia,touvron2023llama}.
Throughout this time, when one \textit{generates} from language models, one almost always lowers entropy and performs some kind of truncation sampling (or in the extreme, greedy decoding).
Our results suggest that training high-rank language models may appear unnecessary because default truncation sampling mitigates errors stemming from the low-rank approximation.

\section{Proofs}
\label{app:proofs}

\begin{proof}[Proof of Theorem~\ref{thm:trunc}]
  By the precondition of the theorem, we have $\log p^*_i - \log \hat{p}_i \leq \delta$ for all $i$. It follows that:
  \begin{equation}
    \label{eq:delta}
    \hat{p}_i \geq p^*_i \exp(-\delta) .
  \end{equation}
  Intuitively, since $\hat{p}$ is a valid probability distribution summing to $1$, if it cannot underestimate token probabilities beyond a factor of $\exp(-\delta)$, then it also cannot overestimate other tokens' probabilities beyond a certain factor; we will show that this factor is $1 - \exp(-\delta)$.
  
  To this end, we consider each token individually and calculate the maximum possible probability overestimation based on the maximum probability underestimation of the other tokens. 
  Keeping in mind that any probability added to a token must be removed from other tokens to preserve a valid probability distribution,
  the maximum probability added to a token is the sum of 
  the maximum probabilities subtracted from the other tokens.
  This gives us that for all $i$: %$i, j$
  \begin{align}
    \hat{p}_i - p^*_i &= \sum_{k\neq i} p^*_k - \sum_{k\neq i} \hat{p}_k \\
    &\leq \sum_{k\neq i} \Big( p^*_k - p^*_k\exp(-\delta) \Big) & \text{From~(\ref{eq:delta})}\\
    &= (1 - \exp(-\delta)) \sum_{k\neq i} p^*_k&\text{Factor out $p^*_k$} \\
    &= (1 - \exp(-\delta)) (1-p^*_i) &\text{Probabilities sum to 1}\\ 
    &\leq 1 - \exp(-\delta) &0\leq p^*_i\leq 1.
  \end{align}
  We thus have our desired probability overestimation bound, starting with the assumption of a log-probability underestimation bound.
\end{proof}

\begin{proof}[Proof of Theorem~\ref{thm:botsamp}]
  We begin by assuming that our model has learned 
  to minimize cross-entropy with the true distribution,
  implying that 
  \begin{equation}
    \label{eq:gradient}
    \frac\partial{\partial\vh} \crossentropy(\softmax(\mW\vh,\vp^*) = 0.
  \end{equation}
  Expanding and simplifying this equation, we can obtain
  \begin{align}
    \frac{\partial}{\partial\vh}\left(-\sum_i p^*_i\log(\mathrm{softmax}(Wh)_i)\right) &= 0 & \text{cross entropy defn.}\\
    \frac{\partial}{\partial\vh}\left(-\sum_i p^*_iWh_i - p^*_i\log\sum_j\exp(Wh)_j\right) &= 0 & \text{Log of softmax}\\
    \frac{\partial}{\partial\vh}\sum_i p^*_i\log\sum_j\exp(Wh)_j &= \frac{\partial}{\partial \vh}\sum_i p^*_i(Wh)_i & \text{Distribute $\frac{\partial}{\partial h}$} \\
    \frac{\partial}{\partial\vh}\log\sum_j\exp(Wh)_j &= \frac{\partial}{\partial \vh}\sum_i p^*_i(Wh)_i & \sum_ip^*_i=1\\
    \frac{\sum_j\exp(Wh)_j\frac{\partial}{\partial\vh}(Wh)_j}
    {\sum_j\exp(Wh)_j} &= \frac{\partial}{\partial\vh} \sum_i p^*_i(Wh)_i & \text{Derivative} \\
    \frac{\partial}{\partial\vh}(\mW\vh)^T\frac{\exp(\mW\vh)}
    {\sum_j\exp(Wh)_j} &= \frac{\partial}{\partial\vh} (\mW\vh)^T\vp^* & \text{Factor} \\
    \frac{\partial}{\partial\vh}(\mW\vh)^T\softmax(\mW\vh) &= \frac{\partial}{\partial\vh} (\mW\vh)^T\vp^* & \text{Softmax defn.} \\
    \mW^T\hat \vp&=\mW^T\vp^* & \text{Derivative}
    \label{eq:wp}
  \end{align}
  where $\hat\vp$ is the output distribution of the model.
  Thus, if there does not exist any valid probability distribution $\vp$ 
  such that $p_i=0$ and $\mW^T\vp=\mW^T\hat \vp$, then $p^*_i\neq 0$. 
\end{proof}

\section{Basis-aware threshold sampling in practice}
\label{app:ba-in-practice}

Basis-aware sampling presents a number of practical challenges. 
Chief among them is the sheer size of the linear programs to be solved.
These programs have $v$ variables and $d + 2v + 2$ constraints. 
No open-source solver we tried was able to solve a single problem in a reasonable amount of time, avoid hitting a numerical errors, and solve within its default max-iteration limits. Proprietary solvers do better in some cases, but only the MOSEK solver~\citep{mosek} was able to solve the full problem in under 1 minute. Even this relatively faster solving rate makes text generation at scale impractical. 

To address this, we reduce the size of the linear program dramatically by discarding many constraints. While doing so, however, we also aim to maintain as much of the original solution space as possible, so as to minimize the effect on the set of tokens discarded by basis-aware sampling.\footnote{Without any constraints, basis-aware threshold sampling reduces to basic threshold sampling.}

In order to reduce the number of constraints originating from the $W^T\vp = W^T\hat{\vp}$ term from $d$ to $c$, we can simply discard any $d-c$ columns of $W$ to obtain $W^c$. Clearly, if $\vp$ satisfies $W^T\vp = W^T\hat{\vp}$, it will continue to also satisfy $W^{cT}\vp = W^{cT}\hat{\vp}$. Thus, if a token was originally rejected by bottleneck-aware sampling, it would still be rejected, i.e., using $W^c$ instead of $W$ does not add new candidate tokens. It may, however, remove some candidates, and we would like to minimize this effect.

Suppose $W$ has rank $b \leq d$. Then the set of probability distributions $\vp$ satisfying $W^T\vp = W^T\hat{\vp}$ forms a linear subspace $S \subseteq \mathbb{R}^v$ of dimension $d - b$. Further, $W^c$ has rank at most $\min\{b, c\}$, implying the set of distributions $\vp$ satisfying the relaxed condition $W^{cT}\vp = W^{cT}\hat{\vp}$ forms a linear superspace $S^c$ of $S$ of dimension \emph{at least} $d - \min\{b, c\}$. Recall that the larger $S^c$ is, the more candidate tokens will be removed by bottleneck sampling. Thus, to minimize candidate removal, we seek an $S^c$ that is of dimension \emph{exactly} $d - \min\{b, c\}$. This can be achieved easily by keeping in $W^c$ any set of $\min\{b, c\}$ linearly independent columns of $W$. Note that if $b \leq c$, the use of such a $W^c$ will, in fact, not remove \emph{any} candidate, as $S^c$ will equal $S$. Otherwise $S^c$ will be a $d-c$ dimensional superspace of $S$.

When $b > c$, however, this solution is still not optimal, as which linearly independent columns of $W$ we choose to keep in $W^c$ determines how ``close'' $S^c$ will be to the original solution space $S$. Intuitively, we would like to preserve $S$ along dimensions that correspond to the $c$ largest eigenvalues of $W$. To accomplish this, we turn to singular value decomposition: find three matrices $U\in\mathbb R^{v\times d}$, $\Sigma\in\mathbb R^{d\times d}$,  and $V\in\mathbb R^{d\times d}$ such that $W=U \Sigma V^T$, then replace $W$ with $U^c \in \mathbb R^{v \times c}$, where $U^c$ represents the first $c$ columns of $U$. Since $U$ is simply a linear transformation of $W$, the solutions (in terms of $\vp$) of $U^T\vp = U^T\hat{\vp}$ are precisely the subspace $S$ of dimension $d-b$ as before. Again, as before, replacing $W$ with $U^c$ does not add new tokens to the set of candidates, and may remove some candidates when $b > c$. Importantly, when $b > c$, $U^c$ will intuitively be the ``closest'' possible approximation of $W$ (capturing its $c$ largest eigenvalues). Thus, $S^c$ will form a desirable approximation of $S$.

The above SVD based approximation is what we use in practice. This reduces the number of constraints from $d$ ($\approx700$-$1200$ for our models) to $c$ (typically $20$), and shortens the run time from over a minute on a proprietary solver to about a second.

\section{Parameter selection}
\label{sec:param_select}

\begin{table}[ht]
    \centering
    \caption{Parameter sweeps and chosen parameters for each method and size}
    \label{tab:params}
    \tiny
    \begin{tabular}{lrSSSS}
\toprule
     Method & Sweep & {Small} & {Medium} & {Large} & {XL}  \\
     \midrule
     Nucleus & $\{0.89, 0.9, 0.92, 0.95, 0.99\}$ & 0.92 & 0.89 & 0.92 & 0.95 \\
     $\epsilon$ & $\{0.0003, 0.0006, 0.0009, 0.001, 0.002\}$ & 0.0009& 0.0003& 0.0009& 0.0003\\
     $\eta$ & $\{0.0003, 0.0006, 0.0009, 0.002, 0.004\}$ & 0.0009 & 0.002 & 0.0009 & 0.002 \\
     \bottomrule
\end{tabular}

% {
%   "nucleus": {
%     "gpt2": 0.92,
%     "gpt2-medium": 0.89,
%     "gpt2-large": 0.92,
%     "gpt2-xl": 0.95
%   },
%   "eta": {
%     "gpt2": 0.0009,
%     "gpt2-medium": 0.002,
%     "gpt2-large": 0.0009,
%     "gpt2-xl": 0.002
%   },
%   "bottleneck-aware": {
%     "gpt2": 0.015,
%     "gpt2-medium": 0.015,
%     "gpt2-large": 0.010,
%     "gpt2-xl": 0.010
%   },
%   "epsilon": {
%     "gpt2": 0.0009,
%     "gpt2-medium": 0.0003,
%     "gpt2-large": 0.0009,
%     "gpt2-xl": 0.0003
%   }
% }

\end{table}

When comparing sampling methods, choice of parameters is very important, since each method has its own diversity-coherence trade-off characteristics.
Without proper controls, it is impossible to tell whether the performance gap between two heuristics might be closed  by simply adjusting the parameter of the worse-performing method.
To remedy this, we control for parameter choice by matching parameters of compared methods based on how conservative they are with respect to human text.
In particular, for each vanilla threshold sampling method $x$, we choose the BA-$x$ parameter that rejects the same proportion of tokens from a human corpus.
Table~\ref{tab:rejrate} illustrates how we measure this \emph{human-text rejection rate} (HRR).
In our experiments, measure HRR by sampling 10,000 tokens with their prefixes from Open Web Text and calculating the proportion of the tokens that are accepted by a sampling method with a given parameter. 
\begin{table}[ht]
  \caption{With a hyperparameter of 0.002, $\epsilon$-sampling would have a human-text rejection rate of 1/5 on this text.}
  \label{tab:rejrate}
  \centering
  \small
  \begin{tabular}{l|lllll}
    Token & I'm & the & problem, & it's & me. \\
    Probability & 0.02 & 0.3 & 0.01 & \color{purple}0.001 & 0.3 \\
  \end{tabular}
\end{table}

\begin{figure}[ht]
  \centering
  \input{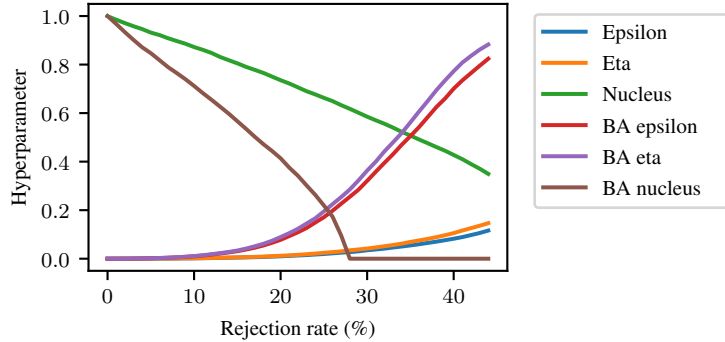}
  \caption{Truncation sampling parameters for various methods by HRR, the proportion of human-text the sampling methods reject.}
  \label{fig:rel_params}
\end{figure}
Figure~\ref{fig:rel_params} gives the sampling parameters as a function of HRR.
As HRR approaches zero, parameters become more permissive, i.e., nucleus approaches one,
$\eta$ and $\epsilon$ approach zero, in order to accept more tokens. 
We observe that as HRR increases, 
BAT parameters are consistently more conservative than their vanilla counterparts
since BAT methods sample tokens beyond the threshold.
In the case of BA-$p$, the parameter maxes out around 28\% HRR,
meaning that it cannot reject more than 28\% of human tokens.

\subsection{Human Evaluation}
\label{sec:app-human-eval}

\begin{figure}[ht]
    \centering
    \includegraphics[width=\textwidth]{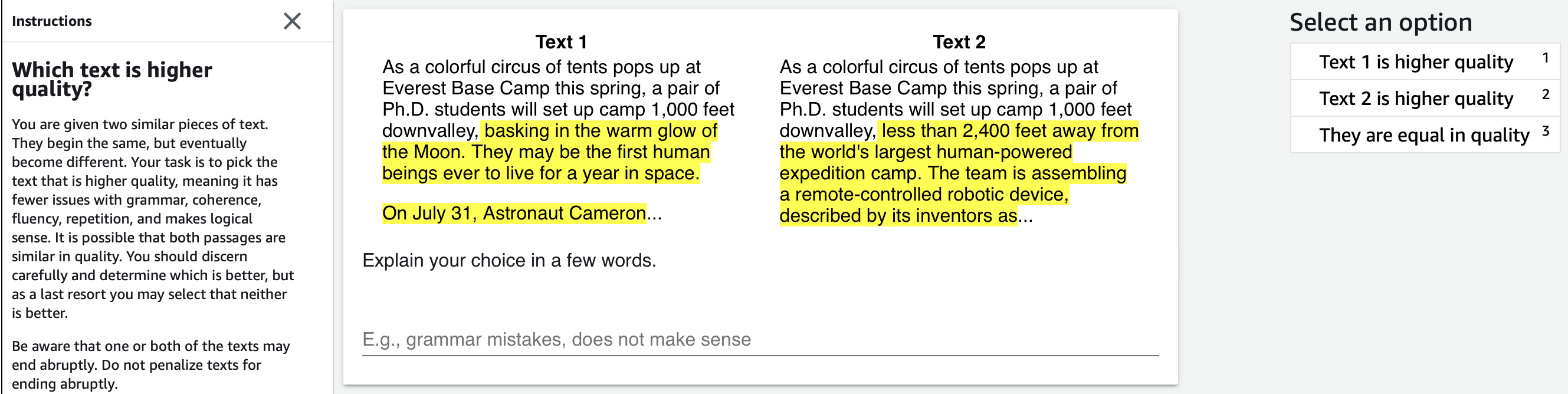}
    \caption{An example of the interface and instructions shown to human annotators.}
    \label{fig:mturk}
\end{figure}
Annotators are paid \$1 USD per annotation, and each annotation takes on average less than 2 minutes.
\autoref{fig:mturk} provides the exact instructions and layout given to the annotators.

\section{Truncated Language Model Distributions are High-Rank}
We motivated truncation sampling as helping to correctly discard tokens that are not in the support of the true distribution $\vp^*$ when those errors are due to the low-rank nature of language models' distributions.
In this additional experiment, we show that the post-truncation conditional distribution matrix $A$ is \textit{high-rank} relative to the pre-truncation distribution.

We run the GPT2-XL model on samples of OpenWebText, concatenate the conditional log-distributions $\log\hat{\vp}$ for each prefix, and compute the rank of the resulting matrix.
This becomes a rather large matrix, since each $\log\hat{\vp}$ is in $\mathbb{R}^{50257}$, so we are limited in the number of prefixes we can consider.
Since the number of prefixes upper-bounds the estimated rank, and we cannot run, e.g., $50257$ prefixes, we plot the rank for various numbers of prefixes.
We find that the GPT2-xl model, which has a hidden dimensionality of 1600, has rank that saturates at 1600, as expected.
For truncation sampling strategies nucleus, $\eta$-sampling, and $\epsilon$-sampling, we find that the estimate of the rank continues to grow with the number of prefixes, far past 1600.
See Table~\ref{fig:estimated_rank}.

\begin{figure}[ht]
\centering
\includegraphics[width=.6\linewidth]{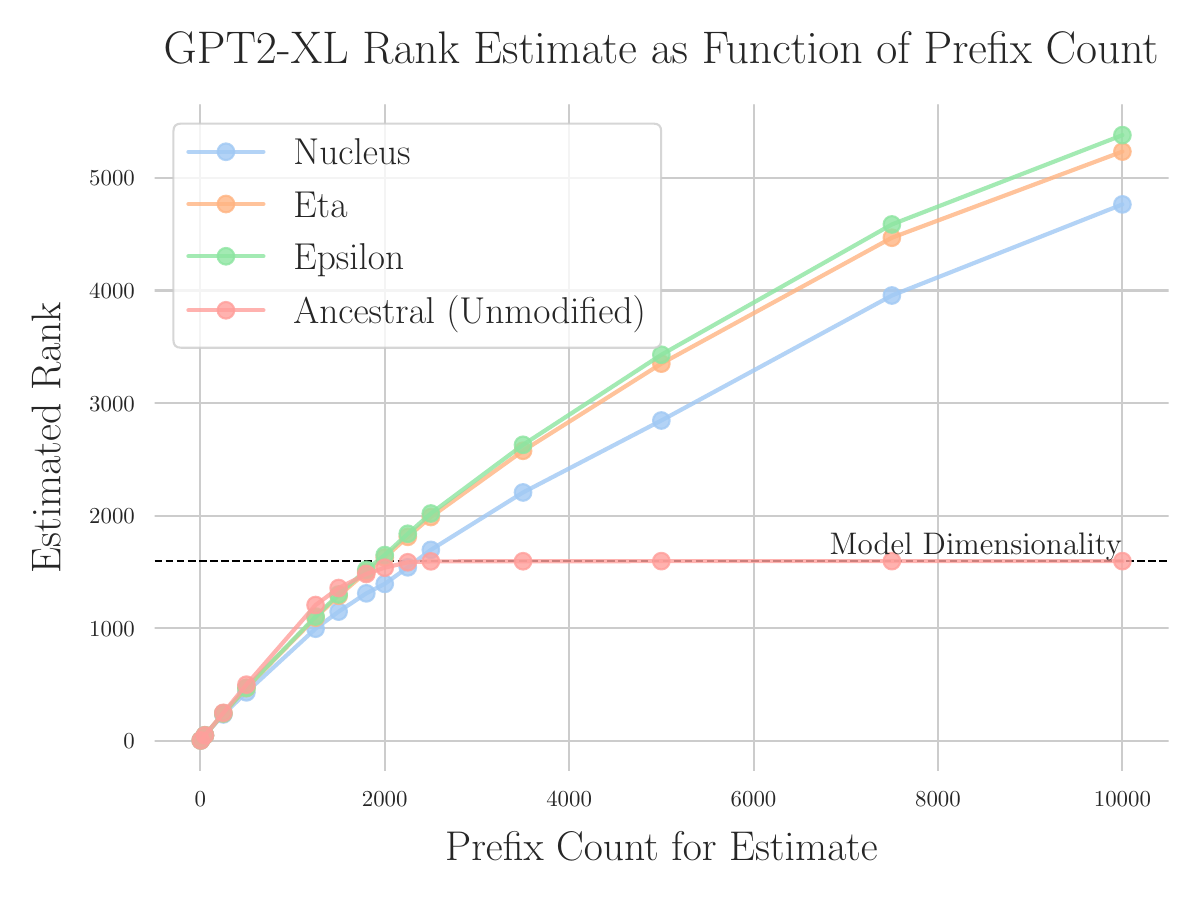}
\caption{\label{fig:estimated_rank}The estimated rank of the log-probability distributions of a model with truncation grow far past its hidden dimensionality; without truncation, the rank is constrained to the hidden dimensionality.}
\end{figure}

\section{More unit tests}

We give the unit tests used in Figures~\ref{fig:fig1} and \ref{fig:unit-tests} in tabular form (Tables~\ref{tab:unit0}-\ref{tab:unit6}). 

\begin{table}[ht]
  \caption{
    A subset of the next-word distribution according to GPT2 for the context ``\texttt{<|endoftext|>Taylor}''. 
    The last four columns denote whether each token is in the support of the titular strategies. 
    Notice that BA-$\eta$ is able to accept good continuations like `\texttt{~will}' and `\texttt{~Hanson}' 
    while excluding questionable continuations like `\texttt{~Sw}' 
    which likely has higher probability because of its embedding alignment with `\texttt{~Swift}'.
  }
  \label{tab:unit0}
  \centering
  \tiny
  \begin{tabular}{rrlrrrr}
\toprule
Rank & Prob & Token & BA Eta & Eta & Epsilon & Nucleus \\
\midrule
0 & 4.3e-01 & `\texttt{ Swift}' & True & True & True & True \\
3 & 2.1e-02 & `\texttt{ is}' & True & True & True & True \\
4 & 1.9e-02 & `\texttt{ Hall}' & True & True & \color{lightgray} False & True \\
29 & 2.5e-03 & `\texttt{ Smith}' & True & True & \color{lightgray} False & True \\
30 & 2.2e-03 & `\texttt{ Sw}' & \color{lightgray} False & True & \color{lightgray} False & True \\
31 & 2.2e-03 & `\texttt{ K}' & True & True & \color{lightgray} False & True \\
35 & 2.0e-03 & `\texttt{ Miller}' & True & True & \color{lightgray} False & True \\
36 & 2.0e-03 & `\texttt{ Wilson}' & True & True & \color{lightgray} False & \color{lightgray} False \\
38 & 1.9e-03 & `\texttt{ will}' & True & True & \color{lightgray} False & \color{lightgray} False \\
39 & 1.9e-03 & `\texttt{ "}' & \color{lightgray} False & True & \color{lightgray} False & \color{lightgray} False \\
40 & 1.9e-03 & `\texttt{ says}' & True & True & \color{lightgray} False & \color{lightgray} False \\
41 & 1.7e-03 & `\texttt{ Hanson}' & True & \color{lightgray} False & \color{lightgray} False & \color{lightgray} False \\
42 & 1.7e-03 & `\texttt{ D}' & \color{lightgray} False & \color{lightgray} False & \color{lightgray} False & \color{lightgray} False \\
43 & 1.7e-03 & `\texttt{ Lew}' & True & \color{lightgray} False & \color{lightgray} False & \color{lightgray} False \\
44 & 1.7e-03 & `\texttt{ Hicks}' & True & \color{lightgray} False & \color{lightgray} False & \color{lightgray} False \\
45 & 1.6e-03 & `\texttt{ St}' & \color{lightgray} False & \color{lightgray} False & \color{lightgray} False & \color{lightgray} False \\
46 & 1.6e-03 & `\texttt{ C}' & \color{lightgray} False & \color{lightgray} False & \color{lightgray} False & \color{lightgray} False \\
47 & 1.6e-03 & `\texttt{ Wood}' & True & \color{lightgray} False & \color{lightgray} False & \color{lightgray} False \\
50 & 1.5e-03 & `\texttt{ Hein}' & True & \color{lightgray} False & \color{lightgray} False & \color{lightgray} False \\
51 & 1.5e-03 & `\texttt{ J}' & \color{lightgray} False & \color{lightgray} False & \color{lightgray} False & \color{lightgray} False \\
60 & 1.2e-03 & `\texttt{ Lee}' & \color{lightgray} False & \color{lightgray} False & \color{lightgray} False & \color{lightgray} False \\
61 & 1.1e-03 & `\texttt{ Kits}' & True & \color{lightgray} False & \color{lightgray} False & \color{lightgray} False \\
62 & 1.1e-03 & `\texttt{ Martin}' & \color{lightgray} False & \color{lightgray} False & \color{lightgray} False & \color{lightgray} False \\
81 & 8.4e-04 & `\texttt{ also}' & \color{lightgray} False & \color{lightgray} False & \color{lightgray} False & \color{lightgray} False \\
\bottomrule
\end{tabular}

\end{table}

\begin{table}[ht]
  \caption{
`\texttt{<|endoftext|>My name}'
  }
  \label{tab:unit1}
  \centering
  \tiny
  \begin{tabular}{rrlrrrr}
\toprule
Rank & Prob & Token & BA Eta & Eta & Epsilon & Nucleus \\
\midrule
0 & 9.6e-01 & `\texttt{ is}' & True & True & True & True \\
1 & 3.2e-02 & `\texttt{'s}' & True & True & True & \color{lightgray} False \\
2 & 1.2e-03 & `\texttt{ was}' & True & \color{lightgray} False & \color{lightgray} False & \color{lightgray} False \\
3 & 1.0e-03 & `\texttt{,}' & \color{lightgray} False & \color{lightgray} False & \color{lightgray} False & \color{lightgray} False \\
4 & 7.7e-04 & `\texttt{ isn}' & True & \color{lightgray} False & \color{lightgray} False & \color{lightgray} False \\
5 & 4.7e-04 & `\texttt{ Is}' & True & \color{lightgray} False & \color{lightgray} False & \color{lightgray} False \\
6 & 4.1e-04 & `\texttt{ and}' & \color{lightgray} False & \color{lightgray} False & \color{lightgray} False & \color{lightgray} False \\
22 & 5.1e-05 & `\texttt{ IS}' & \color{lightgray} False & \color{lightgray} False & \color{lightgray} False & \color{lightgray} False \\
\bottomrule
\end{tabular}

\end{table}

\begin{table}[ht]
  \caption{
`\texttt{<|endoftext|>My name is}'
  }
  \label{tab:unit2}
  \centering
  \small
  \begin{tabular}{rrlrrrr}
\toprule
Rank & Prob & Token & BA Eta & Eta & Epsilon & Nucleus \\
\midrule
0 & 1.1e-02 & `\texttt{ David}' & True & True & \color{lightgray} False & True \\
20 & 5.0e-03 & `\texttt{ Adam}' & True & True & \color{lightgray} False & True \\
1156 & 1.3e-04 & `\texttt{ Ily}' & True & True & \color{lightgray} False & \color{lightgray} False \\
1167 & 1.3e-04 & `\texttt{ Curt}' & True & True & \color{lightgray} False & \color{lightgray} False \\
1168 & 1.3e-04 & `\texttt{ Sk}' & \color{lightgray} False & True & \color{lightgray} False & \color{lightgray} False \\
1169 & 1.3e-04 & `\texttt{ Stewart}' & \color{lightgray} False & True & \color{lightgray} False & \color{lightgray} False \\
1170 & 1.3e-04 & `\texttt{ Avery}' & True & True & \color{lightgray} False & \color{lightgray} False \\
1175 & 1.3e-04 & `\texttt{ Aud}' & True & True & \color{lightgray} False & \color{lightgray} False \\
1176 & 1.3e-04 & `\texttt{ Eb}' & \color{lightgray} False & True & \color{lightgray} False & \color{lightgray} False \\
1177 & 1.3e-04 & `\texttt{ Brock}' & \color{lightgray} False & True & \color{lightgray} False & \color{lightgray} False \\
1178 & 1.3e-04 & `\texttt{ Franc}' & True & True & \color{lightgray} False & \color{lightgray} False \\
1184 & 1.3e-04 & `\texttt{ Mercedes}' & True & True & \color{lightgray} False & \color{lightgray} False \\
1185 & 1.3e-04 & `\texttt{ JJ}' & True & \color{lightgray} False & \color{lightgray} False & \color{lightgray} False \\
1194 & 1.2e-04 & `\texttt{ Sebast}' & True & \color{lightgray} False & \color{lightgray} False & \color{lightgray} False \\
1195 & 1.2e-04 & `\texttt{ Di}' & \color{lightgray} False & \color{lightgray} False & \color{lightgray} False & \color{lightgray} False \\
1196 & 1.2e-04 & `\texttt{ Maxwell}' & True & \color{lightgray} False & \color{lightgray} False & \color{lightgray} False \\
1205 & 1.2e-04 & `\texttt{ Mand}' & True & \color{lightgray} False & \color{lightgray} False & \color{lightgray} False \\
\bottomrule
\end{tabular}

\end{table}

\begin{table}[ht]
  \caption{
`\texttt{<|endoftext|>The capital of of the USA is Washington D.C. The capital of India is New Delhi. The capital of the UK is London. The capital of Ghana is}'
  }
  \centering
  \small
  \begin{tabular}{rrlrrrr}
\toprule
Rank & Prob & Token & BA Eta & Eta & Epsilon & Nucleus \\
\midrule
0 & 9.9e-01 & `\texttt{ Acc}' & True & True & True & True \\
1 & 6.8e-03 & `\texttt{ Ab}' & True & \color{lightgray} False & \color{lightgray} False & \color{lightgray} False \\
2 & 6.7e-04 & `\texttt{ Kum}' & True & \color{lightgray} False & \color{lightgray} False & \color{lightgray} False \\
3 & 2.6e-04 & `\texttt{ Con}' & \color{lightgray} False & \color{lightgray} False & \color{lightgray} False & \color{lightgray} False \\
4 & 2.3e-04 & `\texttt{ Ghana}' & True & \color{lightgray} False & \color{lightgray} False & \color{lightgray} False \\
5 & 1.4e-04 & `\texttt{ Abu}' & \color{lightgray} False & \color{lightgray} False & \color{lightgray} False & \color{lightgray} False \\
22 & 1.2e-05 & `\texttt{ Tem}' & \color{lightgray} False & \color{lightgray} False & \color{lightgray} False & \color{lightgray} False \\
\bottomrule
\end{tabular}

  \label{tab:unit3}
\end{table}

\begin{table}[ht]
  \caption{
`\texttt{<|endoftext|>Donald}'
  }
  \label{tab:unit4}
  \centering
  \small
  \begin{tabular}{rrlrrrr}
\toprule
Rank & Prob & Token & BA Eta & Eta & Epsilon & Nucleus \\
\midrule
0 & 9.4e-01 & `\texttt{ Trump}' & True & True & True & True \\
1 & 1.4e-02 & `\texttt{ J}' & True & \color{lightgray} False & \color{lightgray} False & \color{lightgray} False \\
2 & 6.2e-03 & `\texttt{ Glover}' & True & \color{lightgray} False & \color{lightgray} False & \color{lightgray} False \\
3 & 1.9e-03 & `\texttt{ Sterling}' & \color{lightgray} False & \color{lightgray} False & \color{lightgray} False & \color{lightgray} False \\
22 & 3.1e-04 & `\texttt{ Donald}' & \color{lightgray} False & \color{lightgray} False & \color{lightgray} False & \color{lightgray} False \\
\bottomrule
\end{tabular}

\end{table}

\begin{table}[ht]
  \caption{
`\texttt{<|endoftext|>The}'
  }
  \label{tab:unit5}
  \centering
  \small
  \begin{tabular}{rrlrrrr}
\toprule
Rank & Prob & Token & BA Eta & Eta & Epsilon & Nucleus \\
\midrule
0 & 1.3e-02 & `\texttt{ first}' & True & True & \color{lightgray} False & True \\
20 & 2.9e-03 & `\texttt{ American}' & True & True & \color{lightgray} False & True \\
4338 & 3.5e-05 & `\texttt{ RNC}' & True & True & \color{lightgray} False & \color{lightgray} False \\
4340 & 3.5e-05 & `\texttt{ poet}' & True & True & \color{lightgray} False & \color{lightgray} False \\
4354 & 3.5e-05 & `\texttt{ BE}' & True & True & \color{lightgray} False & \color{lightgray} False \\
4357 & 3.5e-05 & `\texttt{ inevitable}' & True & True & \color{lightgray} False & \color{lightgray} False \\
4358 & 3.5e-05 & `\texttt{ hackers}' & True & \color{lightgray} False & \color{lightgray} False & \color{lightgray} False \\
4359 & 3.5e-05 & `\texttt{ Bright}' & True & \color{lightgray} False & \color{lightgray} False & \color{lightgray} False \\
5232 & 2.8e-05 & `\texttt{ PBS}' & \color{lightgray} False & \color{lightgray} False & \color{lightgray} False & \color{lightgray} False \\
5233 & 2.8e-05 & `\texttt{ Grammy}' & \color{lightgray} False & \color{lightgray} False & \color{lightgray} False & \color{lightgray} False \\
\bottomrule
\end{tabular}

\end{table}

\begin{table}[ht]
  \caption{
`\texttt{<|endoftext|>The feeling! The feeling! The feeling! The feeling! The feeling! The feeling! The feeling! The feeling! The feeling! The feeling! The feeling!}'
}
  \label{tab:unit6}
  \centering
  \small
  \begin{tabular}{rrlrrrr}
\toprule
Rank & Prob & Token & BA Eta & Eta & Epsilon & Nucleus \\
\midrule
0 & 9.5e-01 & `\texttt{ The}' & True & True & True & True \\
1 & 2.3e-02 & `\texttt{\textbackslash n}' & True & \color{lightgray} False & True & \color{lightgray} False \\
2 & 2.6e-03 & `\texttt{ THE}' & True & \color{lightgray} False & \color{lightgray} False & \color{lightgray} False \\
3 & 2.2e-03 & `\texttt{\textbackslash n\textbackslash n}' & \color{lightgray} False & \color{lightgray} False & \color{lightgray} False & \color{lightgray} False \\
4 & 1.8e-03 & `\texttt{ I}' & \color{lightgray} False & \color{lightgray} False & \color{lightgray} False & \color{lightgray} False \\
5 & 9.9e-04 & `\texttt{The}' & True & \color{lightgray} False & \color{lightgray} False & \color{lightgray} False \\
6 & 6.3e-04 & `\texttt{ It}' & \color{lightgray} False & \color{lightgray} False & \color{lightgray} False & \color{lightgray} False \\
22 & 1.3e-04 & `\texttt{ My}' & \color{lightgray} False & \color{lightgray} False & \color{lightgray} False & \color{lightgray} False \\
\bottomrule
\end{tabular}

\end{table}

\end{document}